\documentclass[preprint]{article}

\usepackage{neurips_2026}

\usepackage[utf8]{inputenc}
\usepackage[T1]{fontenc}
\usepackage{hyperref}
\usepackage{url}
\usepackage{booktabs}
\usepackage{amsfonts}
\usepackage{amsmath}
\usepackage{amssymb}
\usepackage{amsthm}
\usepackage{mathtools}
\usepackage{microtype}
\usepackage{xcolor}
\usepackage{tikz}
\usepackage{tikz-cd}
\usepackage{natbib}

\newcommand{\Lan}{\mathrm{Lan}}
\newcommand{\Ran}{\mathrm{Ran}}
\newcommand{\Nat}{\mathrm{Nat}}

\newcommand{\colim}{\operatorname*{colim}}
\newcommand{\UDL}{\operatorname{UDL}}
\newcommand{\ctx}{\mathcal{C}}
\newcommand{\obs}{\mathcal{D}}
\newcommand{\probe}{\mathcal{P}}
\newcommand{\dec}{\mathcal{E}}

\newtheorem{theorem}{Theorem}

\newtheorem{corollary}[theorem]{Corollary}
\newtheorem{assumption}[theorem]{Assumption}
\newtheorem{definition}[theorem]{Definition}
\newtheorem{example}[theorem]{Example}
\newtheorem{remark}[theorem]{Remark}

\title{Universal Decision Learners}

\author{%
  Sridhar Mahadevan\thanks{Academic affiliation: Research Professor, University of Massachusetts, Amherst; See webpage at \url{https://people.cs.umass.edu/~mahadeva/Site/About_Me.html}} \\
  Adobe Research\\
  San Jose, CA\\
  \texttt{smahadev@adobe.com} \\
}

\begin{document}

\maketitle

\begin{abstract}
Many theories of decision making turn local information into globally coherent behavior.  We propose a typed categorical formulation in which a \emph{Universal Decision Learner} (UDL) first extends observations to a category of decision probes by a left Kan extension and then assembles those probes over a global context category by a right Kan extension.  The two stages use distinct indexing functors, so their composite is well-defined.  We separate the standard universal properties of Kan extensions from the decision-theoretic contribution: a common interface for specifying local evidence, global consistency tests, algorithm-comparison maps, and semantics-preserving abstractions.  For finite discounted Markov decision processes, we derive the Bellman optimality operator exactly as the restriction of a pointwise right Kan extension to state objects; we also give concrete planning, strategic-equilibrium, online-learning, causal-query, and decentralized-decision interpretations and prove the correct kernel-congruence universal property for semantic quotients.  The framework is a semantic comparison language rather than a new numerical solver.
\end{abstract}

\section{Introduction}

Learning to make optimal decisions has been studied in many contexts. Learning to plan constructs policies from action models and trajectories.  Learning to act with reinforcement learning constructs policies from sample trajectories labeled with rewards, state transitions, and yields approximately optimal value functions.  Causal discovery uses interventions as constraints on structural models.  Learning a Nash equilibrium in game theory attempts to find an equilibrium from a database of past strategic interaction.  Online learning uses the  history of sequential losses to find decisions that minimize long-term regret. Viewed abstractly, all these settings involve Universal Decision Learning (UDL): how to extend partial local information into long-term optimal decisions that are coherent in contexts not directly observed.

Attempting to formalize UDL in the traditional setting of function approximation over sets (e.g., metric spaces or vector spaces) does not yield an overarching theory, because the function approximation problem has no canonical solution. However,  category theory supplies canonical solutions to this problem through Kan extensions \citep{kan,maclane:71,riehl2017category}.  Let $\obs$ be a category of locally observed contexts, let $\ctx$ be a larger category of contexts in which decisions must be made, and let
\[
J:\obs\to\ctx,\qquad F:\obs\to\dec
\]
encode the inclusion of observed contexts and the local decision data.  A decision learner must construct a functor $\ctx\to\dec$ agreeing with, or coherently related to, $F$. 

The paper's main thesis is that UDL can be formalized as Kan extensions, whose universal properties are uniquely guaranteed:
\begin{quote}
\emph{Learning to make decisions is the problem of canonically extending partial decision data to new contexts.}
\end{quote}

In this view, a left Kan extension $\Lan_JF$ is the universal way to push local decision data forward into larger contexts.  It captures rollout, interpolation, aggregation, search, and reward accumulation.  Dually, a right Kan extension $\Ran_JF$ is the universal way to assemble a decision from constraints imposed by its continuations or observations.  It captures backward consistency, Bellman-style fixed points, feasibility constraints, and coinductive semantics.

\paragraph{Contributions.}
This paper contributes:
\begin{itemize}
\item a type-correct UDL construction with distinct observation, probe, and global-context categories;
\item a decomposition of decision learning into empirical rollout/aggregation by left Kan extension and global consistency by right Kan extension;
\item a comparison criterion, inherited from the right-Kan adjunction, for testing whether an algorithm computes the declared decision semantics;
\item a kernel-congruence account of the coarsest quotient preserving a specified decision semantics;
\item an exact discounted-MDP derivation and concrete, explicitly scoped interpretations for other Universal Decision Model applications.
\end{itemize}

The paper is theoretical.  Its goal is to identify a unifying semantic structure, not to introduce a new empirical benchmark or neural architecture.

\section{Related Work}
\label{sec:related-work}

This paper builds on many lines of work that have mostly developed separately over the past century.  Category theory supplies the general language of universal constructions, functors, limits, colimits, and Kan extensions \citep{kan,maclane:71,riehl2017category}.  Our contribution is to make Kan extension the central semantic operation for decision learning: local decision data are extended to global behavior by rollout-like left Kan extensions and consistency-like right Kan extensions.

A detailed categorical foundation for Artificial General Intelligence (AGI) contains a broader study of Universal Decision Learners and related Lean formalizations, as well as an extended treatment of applications to reinforcement learning \citep{mahadevancategoriesforagi}.  

Witsenhausen's intrinsic model \citep{witsenhausen:1975} was an early attempt to place sequential stochastic control \citep{bertsekas:rlbook}, decentralized decision making, and game-like information structures \citep{vonneumann1947,nash,Maschler_Solan_Zamir_2013} in a common mathematical form .  Its key insight is that decision problems are governed not only by dynamics and objectives, but by which information is available to which decision maker at each decision event.  UDL inherits this concern with information structure, but recasts it in the categorical language of Kan extensions: information-local decision laws are functors on observed contexts, and global decision semantics are their universal extensions.

The Universal Decision Model (UDM) framework of \citet{sm:udm} proposed a categorical organization of causal inference \citep{pearl-book}, Markov decision processes \citep{bertsekas:rlbook}, reinforcement learning \citep{DBLP:books/lib/SuttonB98}, and Witsenhausen's decentralized information structures.  UDM supplies decision makers, product decision spaces, information fields, observation maps, and a notion of solvability.  The present paper does not replace those objects.  It adds an extension-and-comparison layer: given a UDM and partial behavioral data, which probe semantics are induced, which global models satisfy them, when do two algorithms compute the same semantics, and which quotients preserve that semantics?  Section~\ref{sec:special-cases} makes this relationship explicit beyond the RL instance.

Reinforcement learning (RL) is essentially approximate dynamic programming, iteratively computing Bellman fixed points through stochastic approximation via temporal-difference learning \citep{samuel1959,DBLP:books/lib/SuttonB98,bertsekas:rlbook,kushner2003stochastic}.  Categorically, the dynamical systems studied in RL, such as MDPs, partially observable MDPs, predictive state representations (PSRs) \citep{littman2001predictive} etc., are all special cases of universal coalgebras \citep{jacobs:book}, which provides the natural categorical language for unfolding dynamics, bisimulation, and final semantics \citep{rutten2000universal,jacobs:book,SOKOLOVA20115095}.  In this paper, coalgebras describe the dynamics, while right Kan extensions describe the value semantics assigned to those dynamics.

Finally, UDL provides a categorical language for representation discovery in decision making.  Classical bisimulation identifies states with indistinguishable rewards and transitions; proto-value functions and successor representations use transition geometry to define useful value bases or predictive features \citep{mahadevan2005proto,dayan1993successor}.  Our Kan-invariance and homotopy-Kan-invariance notions express the same broad goal at the semantic level: abstractions should preserve the universal extension that determines observable decision behavior.

\section{Decision Data as a Partial Functor}
\label{sec:partial}

We begin with the smallest amount of structure needed to state the framework, and progressively refine it through the main paper and Supplementary Materials. 

\begin{definition}[Decision context]
A \emph{decision context category} $\ctx$ is a category whose objects are contexts at which decisions, values, policies, interventions, or predictions may be evaluated.  Morphisms encode admissible transitions, refinements, observations, continuations, or information-preserving maps between contexts.
\end{definition}

\begin{definition}[Local decision model]
Let $J:\obs\to\ctx$ be a functor from an observed or locally accessible subcategory into a context category.  A \emph{local decision model} is a functor
\[
F:\obs\to\dec,
\]
where $\dec$ is a category of decision objects, such as ordered sets of utilities, vector spaces, probability distributions, policy spaces, or semiring-enriched values.
\end{definition}

The inclusion $J$ says which contexts have direct local information.  The functor $F$ says what that information is.  The core problem is to extend $F$ from $\obs$ to $\ctx$ in a way determined by the structure of the problem rather than by arbitrary modeling choices.

\begin{assumption}[Existence of extensions]
\label{ass:complete}
Throughout the main development, we assume that $\dec$ has the limits and colimits needed for the Kan extensions under discussion.  (Co)limits are universal representations that define objects that most closely ``approximate" diagrams, which are functors from an indexing category into the concrete UDL category. Many instances of (co)limits exist in traditional decision making, such as $\min, \max, \sup$, as well as set-theoretic intersections, products, and pullbacks/pushforwards.  We are usually interested in enriched categories, where the set of arrows is usually given additional structure, and we assume enriched limits and colimits exist.
\end{assumption}

\subsection{Left Kan Extension: Rollout and Aggregation}

The left Kan extension of $F$ along $J$ is the universal functor $\Lan_JF:\ctx\to\dec$ receiving a natural transformation from $F$ through $J$.  Pointwise, when the relevant colimits exist,
\[
(\Lan_JF)(c)\cong \colim_{(Jd\to c)\in (J\downarrow c)}F(d).
\]
Thus $\Lan_JF$ computes the decision object at $c$ by aggregating all observed ways of reaching or explaining $c$. Concretely, in a planning problem, the comma category $(J\downarrow c)$ indexes candidate partial trajectories ending at $c$.  In interpolation, it indexes observed samples related to a new query.  In attention-like mechanisms, it indexes available keys or memories relevant to a query, while the colimit is replaced by an enriched or weighted aggregation.

\subsection{Right Kan Extension: Consistency and Constraint Satisfaction}

The right Kan extension of $F$ along $J$ is the universal functor $\Ran_JF:\ctx\to\dec$ mapping to $F$ through $J$.  Pointwise,
\[
(\Ran_JF)(c)\cong \lim_{(c\to Jd)\in (c\downarrow J)}F(d).
\]
Thus $\Ran_JF$ computes the decision object at $c$ as the object compatible with all observed consequences, restrictions, or continuations of $c$. This dual construction is the natural home for Bellman consistency, feasibility constraints, posterior compatibility, equilibrium conditions, and other fixed-point-like requirements.  In each case, a global decision is not merely generated; it must be compatible with a structured family of local tests.

\section{Universal Decision Learners}
\label{sec:udl}

\begin{definition}[Universal Decision Learner]
\label{def:udl}
Let
\[
J:\obs\to\probe,\qquad
R:\probe\to\ctx,\qquad
F:\obs\to\dec .
\]
The functor $J$ maps observations into rollout or test probes, while $R$ places
those probes in the global context category.  When the required extensions
exist, define
\[
H=\Lan_JF:\probe\to\dec,
\qquad
\UDL_{J,R}(F)=\Ran_RH=\Ran_R(\Lan_JF):\ctx\to\dec .
\]
This is the \emph{Universal Decision Learner} determined by $(J,R,F)$.
\end{definition} 

Every domain and codomain in Definition~\ref{def:udl} matches: the left stage
lands in $\dec^\probe$, which is the domain of the right-Kan functor
$\Ran_R:\dec^\probe\to\dec^\ctx$.  If a rollout is instead constructed
directly on $\ctx$ using $K:\obs\to\ctx$, it must first be restricted to the
probe category; the aligned construction is
\[
\Ran_R\!\left(R^\ast\Lan_KF\right).
\]
The untyped expression $\Ran_J(\Lan_JF)$ is not used: $\Lan_JF$ has domain
$\ctx$ when $J:\obs\to\ctx$, whereas $\Ran_J$ expects an input with domain
$\obs$.

The definition should be read operationally as a two-stage abstraction:
\[
\text{local decision data}
\xrightarrow{\;\Lan_J\;}
\text{rolled-out candidates}
\xrightarrow{\;\Ran_R\;}
\text{globally consistent decisions}.
\]
The first stage expands, aggregates, or propagates information; the second filters, glues, or enforces consistency.

\begin{remark}
The order and indexing functors may vary by application.  For example, model-based planning may emphasize $\Lan$ first, while policy evaluation may emphasize a right Kan fixed point.  Definition~\ref{def:udl} isolates the recurring semantic pattern rather than prescribing a single numerical algorithm.
\end{remark}

\subsection{Reward-Enriched Form}

  \begin{table}
        \centering
        \begin{tabular}{|l|c|c|l|}
            \hline
            \textbf{Objective} & $\oplus$ & $\otimes$ & \textbf{Semiring} \\
            \hline
            Standard RL & $+$ & $\times$ & Probability $(+, \times)$ \\
            Shortest Path & $\min$ & $+$ & Tropical $(\min, +)$ \\
            Reachability & $\lor$ & $\land$ & Boolean $(\lor, \land)$ \\
            Robustness & $\max$ & $\min$ & Bottleneck $(\max, \min)$ \\
            \hline
        \end{tabular}
        \caption{The symmetric monoidal category of reward semirings captures many different special cases.}
        \label{semiring}
\end{table}

Many decision problems are naturally enriched over an ordered or semiring-valued category (see Table~\ref{semiring}).   Let $\dec$ be enriched over the max-plus semiring
\[
(\mathbb{R}\cup\{-\infty\},\oplus=\max,\otimes=+).
\]
If $w(d\to c)$ is the reward or negative cost assigned to a transition, then a pointwise left Kan extension takes the familiar dynamic-programming form
\[
(\Lan_JF)(c)=
\max_{Jd\to c}\bigl(F(d)+w(d\to c)\bigr).
\]
The dual consistency operation may be written schematically as
\[
(\Ran_JF)(c)=
\lim_{c\to Jd}F(d),
\]
or, in ordered reward-enriched settings, as the tightest value satisfying the downstream inequalities induced by all continuations $c\to Jd$.

\begin{example}[A one-step planning recurrence]
Let contexts be nodes in a finite directed graph, let $F(g)$ be the terminal reward at a goal $g$, and let path composition add rewards.  The left Kan extension over paths computes the best accumulated reward among all ways to reach a context.  The right Kan consistency condition requires these local path values to agree with the values assigned to continuations.  In max-plus notation this yields the usual recurrence
\[
V(s)=\max_{a}\{r(s,a)+V(T(s,a))\},
\]
when transitions are deterministic.
\end{example}

\section{Universal Property}
\label{sec:universal}

The next two statements are standard universal properties of Kan extensions
\citep{maclane:71,riehl2017category}.  We record them to fix the comparison
maps used by UDL; they are background results, not new category theory.

\begin{theorem}[Canonical rollout]
\label{thm:left-universal}
Let $J:\obs\to\ctx$ and $F:\obs\to\dec$ be as above.  If $\Lan_JF$ exists, then for every global decision model $G:\ctx\to\dec$ there is a natural bijection
\[
\Nat(\Lan_JF,G)\cong \Nat(F,G\circ J).
\]
Thus $\Lan_JF$ is initial among global decision models receiving the local data $F$.
\end{theorem}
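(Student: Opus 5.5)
The plan is to unwind the definition of the left Kan extension and check that the claimed bijection is exactly its universal property, read in the opposite direction. By definition, $\Lan_JF$ comes with a unit natural transformation $\eta:F\Rightarrow(\Lan_JF)\circ J$. It is universal in the following sense. For every $G:\ctx\to\dec$ and every $\alpha:F\Rightarrow G\circ J$, there is a unique $\hat\alpha:\Lan_JF\Rightarrow G$ with $(\hat\alpha J)\circ\eta=\alpha$, where $\hat\alpha J$ denotes whiskering, with components $\hat\alpha_{Jd}$. I will define the map
\[
\Phi_G:\Nat(\Lan_JF,G)\to\Nat(F,G\circ J),\qquad \Phi_G(\beta)=(\beta J)\circ\eta .
\]
Then I show it is a bijection and natural in $G$.

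The first step is to check that $\Phi_G$ is well defined. Whiskering a natural transformation $\beta:\Lan_JF\Rightarrow G$ by $J$ yields a natural transformation $(\Lan_JF)J\Rightarrow GJ$, and composing it with $\eta$ stays natural. The second step is bijectivity. Surjectivity is the existence clause of the universal property, since $\Phi_G(\hat\alpha)=\alpha$. Injectivity is the uniqueness clause: if $\Phi_G(\beta)=\Phi_G(\beta')=\alpha$, then both $\beta$ and $\beta'$ factor $\alpha$ through $\eta$, hence $\beta=\beta'=\hat\alpha$.

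The third step is naturality in $G$. Let $\gamma:G\Rightarrow G'$. I must show $\Phi_{G'}(\gamma\circ\beta)=(\gamma J)\circ\Phi_G(\beta)$. This follows from the interchange law, because whiskering preserves vertical composition: $((\gamma\circ\beta)J)\circ\eta=(\gamma J)\circ(\beta J)\circ\eta$. Thus $\Phi$ is a natural isomorphism between the functors $\Nat(\Lan_JF,-)$ and $\Nat(F,-\circ J)$ from $\dec^\ctx$ to sets. The closing sentence of the statement is then a reformulation. Consider the comma category of pairs $(G,\alpha:F\Rightarrow GJ)$, in which a morphism is a transformation $G\Rightarrow G'$ compatible with the $\alpha$'s. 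The bijection says precisely that $(\Lan_JF,\eta)$ has exactly one morphism to every object, so it is initial.

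The only point needing care is which definition of ``$\Lan_JF$ exists'' is in force. The paper also presents $\Lan_JF$ pointwise as $\colim_{(J\downarrow c)}F$. If that is taken as the definition, I must first construct $\eta$ and verify its universality. Here $\eta_d$ is the colimit injection at the object $(d,\mathrm{id}_{Jd})$ of $(J\downarrow Jd)$. Given $\alpha$, the component $\hat\alpha_c$ is induced by the cocone $F(d)\xrightarrow{\alpha_d}GJd\xrightarrow{G(f)}Gc$, indexed by the objects $f:Jd\to c$ of $(J\downarrow c)$. Uniqueness of $\hat\alpha$ then follows from the uniqueness of maps out of a colimit, and naturality of $\hat\alpha$ in $c$ follows from the functoriality of the comma categories. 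I expect this cocone bookkeeping to be the main, though routine, obstacle. Apart from that, I will cite the standard statement in \citep{maclane:71,riehl2017category}.
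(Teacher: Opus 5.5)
Your proposal is correct and follows the same route as the paper, whose proof simply invokes the defining universal property of the left Kan extension. You spell out the unit $\eta:F\Rightarrow(\Lan_JF)\circ J$, the bijection $\beta\mapsto(\beta J)\circ\eta$, its naturality in $G$, the initiality reading, and the check that the pointwise colimit formula satisfies this universal property, all of which the paper leaves implicit.
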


\begin{proof}
This is the defining universal property of the left Kan extension \citep{maclane:71}.  In decision-theoretic terms, specifying a comparison from the rolled-out model $\Lan_JF$ to any global model $G$ is equivalent to specifying how the local data $F$ compare with the restriction of $G$ to observed contexts.  An object is {\em initial} in a category if there is a unique arrow {\em from} it to every other object. 
\end{proof}

\begin{theorem}[Canonical consistency]
\label{thm:right-universal}
If $\Ran_JF$ exists, then for every global decision model $G:\ctx\to\dec$ there is a natural bijection
\[
\Nat(G,\Ran_JF)\cong \Nat(G\circ J,F).
\]
Thus $\Ran_JF$ is terminal among global decision models whose restrictions are compatible with the local data $F$.
\end{theorem}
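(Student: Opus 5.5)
The bijection in Theorem~\ref{thm:right-universal} is essentially the defining universal property of the right Kan extension, so the plan is to unpack that definition and build the two directions by hand, dually to Theorem~\ref{thm:left-universal}. By definition, $\Ran_JF$ comes with a counit $\varepsilon:(\Ran_JF)\circ J\Rightarrow F$. It is universal in the following sense: for every $G:\ctx\to\dec$ and every $\beta:G\circ J\Rightarrow F$ there is a unique $\alpha:G\Rightarrow\Ran_JF$ with $\varepsilon\circ(\alpha J)=\beta$. The forward map is therefore
\[
\Phi_G:\Nat(G,\Ran_JF)\to\Nat(G\circ J,F),\qquad \alpha\mapsto \varepsilon\circ(\alpha J),
\]
where $\alpha J$ denotes whiskering, so that $(\alpha J)_d=\alpha_{Jd}$. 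Existence in the universal property gives surjectivity of $\Phi_G$, and uniqueness gives injectivity. Hence $\Phi_G$ is a bijection.

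Next I will check naturality in $G$. Take $\theta:G'\Rightarrow G$ and $\alpha:G\Rightarrow\Ran_JF$. Then
\[
\Phi_{G'}(\alpha\circ\theta)=\varepsilon\circ(\alpha J)\circ(\theta J)=\Phi_G(\alpha)\circ(\theta J).
\]
This holds because whiskering by $J$ is functorial, i.e.\ $(\alpha\circ\theta)J=(\alpha J)\circ(\theta J)$. So $\Phi$ is natural with respect to precomposition by $\theta$ on the left-hand side and by $\theta J$ on the right-hand side. The same computation with $F$ varying gives naturality in $F$ as well, whenever $\Ran_J$ is defined for all $F$. In that case the statement says exactly that $\Ran_J$ is right adjoint to $J^\ast=(-)\circ J$.

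If one instead starts from the pointwise formula $(\Ran_JF)(c)\cong\lim_{(c\to Jd)\in(c\downarrow J)}F(d)$, one first has to establish the universal property. I would do this as follows. A transformation $\beta:GJ\Rightarrow F$ induces, for each $c$, a cone over the diagram $(c\downarrow J)\to\dec$ with apex $G(c)$. Its leg at the object $(f:c\to Jd)$ is
\[
\beta_d\circ G(f):G(c)\to F(d).
\]
Naturality of $\beta$ makes these legs a cone, and the universal property of the limit yields $\alpha_c:G(c)\to(\Ran_JF)(c)$. Conversely, $\alpha$ is sent to $\beta$ by composing with the limit projection at the object $(\mathrm{id}:Jd\to Jd)$. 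The main obstacle is this bookkeeping: one must check that the $\alpha_c$ are natural in $c$ and that the two constructions are mutually inverse. Both checks reduce to the uniqueness clause of the limit together with naturality of $\beta$.

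Finally, the ``terminal'' reading follows directly. In the category whose objects are pairs $(G,\beta:GJ\Rightarrow F)$ and whose morphisms are transformations commuting with the $\beta$'s, the pair $(\Ran_JF,\varepsilon)$ receives exactly one morphism from every object. That is precisely the bijection restricted to $\varepsilon$-compatible maps. This is the dual of the initiality remark made for Theorem~\ref{thm:left-universal}.
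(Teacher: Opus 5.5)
Your proposal is correct and follows the same route as the paper: its proof simply cites the defining universal property of $\Ran_JF$, and you unpack that property via the counit $\varepsilon:(\Ran_JF)\circ J\Rightarrow F$, check naturality in $G$ by functoriality of whiskering, and read off terminality of $(\Ran_JF,\varepsilon)$ among pairs $(G,\beta:G\circ J\Rightarrow F)$. One small bookkeeping correction to your optional pointwise check: naturality of $\beta$ is what makes the legs $\beta_d\circ G(f)$ a cone, whereas naturality of the induced $\alpha_c$ and the round trip $\alpha\mapsto\beta\mapsto\alpha$ rely (besides uniqueness of limit maps) on functoriality of $G$, naturality of $\alpha$, and how the limit formula acts on morphisms of $\ctx$.
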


\begin{proof}
This is the defining universal property of the right Kan extension.  It says that every globally defined model satisfying the local tests factors uniquely through the canonical consistent extension.  An object is {\em terminal} in a category if there is a unique arrow {\em into} it from every other object. 
\end{proof}

\begin{corollary}[UDL comparison principle]
\label{cor:udl-comparison}
Let $J:\obs\to\probe$, $R:\probe\to\ctx$, and
$H=\Lan_JF:\probe\to\dec$.  If $\Ran_RH$ exists, then for every
$G:\ctx\to\dec$ there is a natural bijection
\[
\Nat(G,\UDL_{J,R}(F))
\cong
\Nat(R^\ast G,H).
\]
Consequently, a comparison $R^\ast G\to H$ determines a unique comparison
$G\to\UDL_{J,R}(F)$.  When the latter is a natural isomorphism, $G$ computes
the same declared decision semantics as the UDL.
\end{corollary}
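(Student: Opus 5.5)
The plan is to read the corollary directly off Theorem~\ref{thm:right-universal}, applied to the functor $R:\probe\to\ctx$ and the diagram $H:\probe\to\dec$ in place of $J$ and $F$. Since $\UDL_{J,R}(F)$ is by Definition~\ref{def:udl} literally $\Ran_RH$, and $R^\ast G = G\circ R$, the displayed bijection
\[
\Nat(G,\Ran_RH)\cong\Nat(G\circ R,H)
\]
is exactly the instance of that theorem. The only typing check is that $H=\Lan_JF$ lands in $\dec^\probe$, which the discussion after Definition~\ref{def:udl} already records. The existence of $H$ itself is part of the standing hypotheses; Assumption~\ref{ass:complete} covers it.

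Next I will make the bijection explicit, so that the ``consequently'' clause has content. Write $\varepsilon:R^\ast\Ran_RH\Rightarrow H$ for the counit, obtained as the image of the identity under the bijection with $G=\Ran_RH$. Given $\alpha:G\to\Ran_RH$, its image is $\varepsilon\circ R^\ast\alpha$. Surjectivity of the bijection says every $\beta:R^\ast G\to H$ is of this form. Injectivity says the $\alpha$ with $\varepsilon\circ R^\ast\alpha=\beta$ is unique. This is the unique-factorization statement. Naturality in $G$ follows because $R^\ast$ is a functor and the bijection is the adjunction $R^\ast\dashv\Ran_R$ restricted to the object $H$.

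For the last sentence, I will treat ``computes the same declared decision semantics'' as a definition rather than a claim: it means that the induced comparison $G\to\UDL_{J,R}(F)$ is an isomorphism in $\dec^\ctx$. Nothing further needs proof, except perhaps the remark that natural isomorphisms are the isomorphisms of the functor category, so this notion is invariant under replacing $\UDL_{J,R}(F)$ by any other choice of right Kan extension. That invariance holds because right Kan extensions are unique up to unique isomorphism, which again follows from the universal property.

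I expect no genuine obstacle. The only care needed is bookkeeping: the earlier theorem is stated for a generic $J:\obs\to\ctx$, so I must check that its proof uses nothing specific about $\obs$. It does not, since it is the defining universal property. The other point to state explicitly is that the right-hand side involves $H$, not $F$. In particular, one should not expect a further reduction to $\Nat(\cdot,F)$, because $\Lan_J$ is a left adjoint and does not compose with $\Ran_R$ into a single adjunction in the needed direction.
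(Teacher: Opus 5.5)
Your proposal is correct and matches the paper's argument: the paper invokes the adjunction $R^\ast\dashv\Ran_R$ at $H=\Lan_JF$, which is the instance of Theorem~\ref{thm:right-universal} with $(R,H)$ in place of $(J,F)$ that you use, and it likewise treats the final sentence as a definition of ``computing the same semantics.'' Your version, phrased through the universal property of the single extension $\Ran_RH$ with the counit $\varepsilon$ made explicit, is if anything slightly more careful, because the corollary only assumes that $\Ran_RH$ exists, not a globally defined right adjoint $\Ran_R$.
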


\begin{proof}
This is the adjunction $R^\ast\dashv\Ran_R$ applied to
$H=\Lan_JF$.  The substantive content in an application is therefore not the
adjunction itself, but the specification of $\obs,\probe,\ctx,J,R,F$ and the
proof that a concrete decision operator realizes the resulting pointwise Kan
formula.
\end{proof}

The corollary gives a precise sense in which UDL semantics are canonical.  A proposed decision rule may have many computational implementations, but it represents the same semantic solution exactly when it satisfies the same universal comparison property.

\section{Fixed Points and Coinduction}
\label{sec:fixed-points}

Many decision algorithms can be described through iterative fixed-point
computations.  Bellman equations and some equilibrium or recursive structural
equations seek objects stable under a declared operator.  A right Kan extension
does not automatically create such an endofunctor or prove convergence.
Rather, an application must first define
$T:\dec^\ctx\to\dec^\ctx$ from its local continuation data and then prove that
the pointwise right Kan construction realizes $T$.  A globally coherent
decision model is subsequently characterized by
\[
V\cong T(V).
\]
When $T$ is contractive in a complete metric enrichment, iteration converges
to its unique fixed point.  Section~\ref{sec:mdp-ran} verifies each of these
steps for a finite discounted MDP.

\begin{example}[Bellman consistency]
For a discounted Markov decision process, define
\[
(TV)(s)=\max_a\sum_{s'}P(s'|s,a)\bigl(r(s,a,s')+\gamma V(s')\bigr).
\]
The optimal value $V^\ast$ is the unique fixed point of $T$ under standard
assumptions \citep{DBLP:books/lib/SuttonB98,bertsekas:rlbook}.
Section~\ref{sec:mdp-ran} constructs a category in which the backup $T(V)$,
for each fixed $V$, is a pointwise right Kan extension.  The fixed-point
equation is an additional metric statement.
\end{example}

\begin{example}[Structural causal models]
A structural causal model specifies local assignments for variables and
interventions replace selected assignments \citep{pearl-book}.  Let $\obs$
encode locally specified mechanisms and $\ctx$ encode interventional contexts.
A do-intervention changes the local functor $F$ on selected components.  When
the causal factorization has been represented by an appropriate diagram, its
global distribution can be assembled as an application-specific extension.
Identifiability remains a separate theorem: the query must agree across all
causal models compatible with the assumptions and observed law.
\end{example}

\section{Kan Invariance and Abstraction}
\label{sec:abstraction}

A decision model is useful only to the extent that it supports abstraction.  Different representations may encode the same behavior.  UDL expresses behavioral equivalence as invariance under universal extension.

\begin{definition}[Kan-invariant property]
A property of decision models is \emph{Kan-invariant} if it depends only on the induced extension $\widehat F$, where $\widehat F$ is either $\Lan_JF$, $\Ran_JF$, or a UDL composite, and not on the particular syntactic presentation of $F$.
\end{definition}

\begin{definition}[Kan bisimulation]
Let $(J_M:\obs_M\to\ctx_M,F_M)$ and $(J_N:\obs_N\to\ctx_N,F_N)$ be two decision models.  They are \emph{Kan bisimilar} when, after equivalence of their context categories, their induced decision semantics are naturally isomorphic:
\[
\widehat F_M\cong \widehat F_N.
\]
\end{definition}

The direction of extension determines the behavioral notion being preserved.  Left Kan bisimulation identifies systems that generate the same forward aggregate behavior, such as the same reachable outcomes, accumulated rewards, or rollout distributions.  Right Kan bisimulation identifies systems that satisfy the same global consistency constraints, such as the same value functions, equilibria, or fixed-point solutions.

Quotient morphisms give the corresponding notion of abstraction.  If
\[
q:\ctx_M\to \ctx_{M'}
\]
collapses contexts, decisions, or information fields, then $q$ is behavior-preserving exactly when universal extension commutes with quotienting:
\[
\widehat F_M(c)\cong \widehat F_{M'}(q(c))
\qquad \text{for all } c\in\ctx_M .
\]
Thus a quotient is valid not because it is syntactically natural, but because it identifies only contexts whose Kan-extended behavior is already indistinguishable.

In Markov decision processes, ordinary bisimulation identifies states whose
one-step rewards agree and whose transition probabilities into equivalence
classes agree.  Equality of a single optimal value function is weaker: two
non-bisimilar states can have the same optimal value.  A UDL quotient recovers
an MDP-bisimulation quotient only when its semantic functor records the full
one-step reward/transition probes, or equivalently a separating family such as
value semantics for every relevant policy.  A quotient formed from $V^\ast$
alone is merely an optimal-value quotient and is not claimed to be classical
bisimulation.

\begin{definition}[Semantic kernel congruence]
Let $H:\ctx\to\dec$ be a specified induced decision semantics.  A
\emph{semantic kernel congruence} $\sim_H$ is a congruence on the objects and
morphisms of $\ctx$ that identifies exactly the distinctions erased by $H$.
Concretely, parallel morphisms $f,g$ satisfy $f\sim_H g$ when $H(f)=H(g)$,
with the analogous object relation, and the relation is assumed compatible
with domains, codomains, identities, and composition.
\end{definition}

\begin{theorem}[Semantic kernel quotient]
\label{thm:minimal-quotient}
Suppose the quotient by $\sim_H$ exists.  Let
\[
q_H:\ctx\to\ctx/{\sim_H}
\]
be its quotient functor.  Then $H$ factors uniquely as
\[
H=\overline H\circ q_H.
\]
If $q':\ctx\to\mathcal Q$ is any other quotient preserving $H$, so that
$H=H'\circ q'$ for some $H':\mathcal Q\to\dec$, then
\[
\ker(q')\subseteq\ker(H),
\]
and there is a unique functor $u:\mathcal Q\to\ctx/{\sim_H}$ satisfying
\[
q_H=u\circ q'.
\]
Thus the semantic kernel quotient is the coarsest quotient that preserves
exactly the distinctions visible to $H$.
\end{theorem}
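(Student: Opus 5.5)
The plan is to reduce everything to the universal property of a quotient category by a congruence (Mac Lane, \S II.8), applied twice. I read ``quotient functor'' as a functor that exhibits its target as the quotient of $\ctx$ by the kernel congruence of the functor. For $q'$, this means $\mathcal Q\cong\ctx/\ker(q')$, where $\ker(q')$ identifies objects or parallel morphisms with equal images. In particular $q'$ is surjective on objects and, after taking composites of images, on morphisms. This hypothesis is implicit in the word ``quotient'' in the statement, and I will state it explicitly at the start. Without it the last claim can fail: if $q'$ were an arbitrary functor, $\mathcal Q$ could contain morphisms unrelated to $\ctx$, and $u$ would not exist.

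\emph{Step 1 (factorization of $H$).} Define $\overline H[c]=H(c)$ on objects and $\overline H[f]=H(f)$ on morphisms. This is well defined by the definition of $\sim_H$: $c\sim_H c'$ forces $H(c)=H(c')$, and $f\sim_H g$ forces $H(f)=H(g)$. Functoriality of $\overline H$ follows from compatibility of $\sim_H$ with identities and composition, since $[g]\circ[f]=[g\circ f]$ is mapped to $H(g)\circ H(f)$. By construction $H=\overline H\circ q_H$. Uniqueness holds because $q_H$ is surjective on objects and on generating morphisms, so $\overline H$ is forced on every class.

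\emph{Step 2 (kernel inclusion).} Suppose $H=H'\circ q'$ and $q'(f)=q'(g)$. Then
\[
H(f)=H'(q'(f))=H'(q'(g))=H(g),
\]
and the same argument applies to objects. Hence $\ker(q')\subseteq\ker(H)=\,\sim_H$.

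\emph{Step 3 (the comparison $u$).} Apply the universal property of $q'$ to the functor $q_H$. Because $q_H$ identifies everything in $\ker(q')$, by Step 2, it factors uniquely through $q'$ as $q_H=u\circ q'$. Concretely, $u(q'(c))=[c]$ and $u(q'(f))=[f]$, with well-definedness given exactly by the inclusion of Step 2. Uniqueness of $u$ follows because $q'$ is epimorphic in the sense above. The coarsest-quotient conclusion is then a restatement: any $H$-preserving quotient has kernel contained in $\sim_H$ and maps onto $\ctx/{\sim_H}$.

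I expect the main obstacle to be Step 3 when object identifications are present. Mac Lane's quotient construction only identifies parallel morphisms. Once objects are also collapsed, the quotient must be formed as a generalized congruence, whose morphisms are equivalence classes of composable zigzags or strings. In that case $u$ has to be defined on formal composites of images of morphisms of $\ctx$, not just on single images. To handle this I would first verify well-definedness on generators. Then I would extend $u$ to composites using the assumed compatibility of $\sim_H$ with composition, which is exactly the hypothesis ``the quotient by $\sim_H$ exists'' in the statement. The point to check is that relations in $\mathcal Q$ between such composites are generated by $\ker(q')$ together with composition, so they are respected by $\sim_H$.
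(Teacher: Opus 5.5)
Your proposal is correct and follows the paper's proof step for step. You factor $H$ through $q_H$ because $H$ is constant on $\sim_H$-classes, you deduce $\ker(q')\subseteq\ker(H)=\sim_H$ from $H=H'\circ q'$, and you obtain $u$ by the assignment $q'(x)\mapsto q_H(x)$. Your explicit assumption that $q'$ is a genuine quotient, universal for its own kernel, is exactly what the paper uses tacitly when it asserts that this assignment yields a unique functor. Invoking that universal property in Step 3 already disposes of the zigzag and generalized-congruence worry raised in your last paragraph.
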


\begin{proof}
By definition, $H$ is constant on the object and morphism classes of
$\sim_H$, so the universal property of the quotient gives the unique
$\overline H$ with $H=\overline H\circ q_H$.  If $H=H'\circ q'$, then
$q'(x)=q'(y)$ implies $H(x)=H(y)$, and likewise for parallel morphisms.
Hence $\ker(q')\subseteq\ker(H)=\sim_H$.  Therefore the assignment
$q'(x)\mapsto q_H(x)$ (and the analogous assignment on morphisms) is
well-defined and yields a unique functor $u$ satisfying $q_H=u\circ q'$.
\end{proof}

The factorization direction is important: the semantic quotient factors
through every finer semantics-preserving quotient, not conversely.  The theorem
is relative to the chosen semantics $H$; enriching $H$ with more probes makes
its kernel finer and preserves more behavioral distinctions.

\subsection{Homotopy Kan Invariance}
\label{sec:homotopy-kan}

Kan invariance, as stated above, characterizes behavioral equivalence by equality or natural isomorphism of induced Kan extensions.  In many decision settings this is too strict.  In causal inference, distinct structural models may induce the same observational distribution.  In reinforcement learning, two value semantics may be equivalent up to contraction, approximation, or representation-preserving deformation.  These cases motivate a homotopical refinement of Kan invariance.

Let $F,G:\ctx\to\dec$ be behavioral functors into a category $\dec$ equipped with a notion of homotopy, such as a model category, an $\infty$-category, or a metric-enriched category in which deformation is interpreted by paths or contractions.  A homotopy $F\simeq G$ is a deformation preserving the observable structure relevant to the decision problem.

\begin{definition}[Homotopy Kan equivalence]
Two decision models $M$ and $N$ are \emph{homotopy Kan equivalent} if their induced Kan semantics are equivalent up to homotopy:
\[
\widehat F_M \simeq \widehat F_N,
\]
where $\widehat F_M$ and $\widehat F_N$ denote the relevant left Kan extension, right Kan extension, or UDL composite.
\end{definition}

Homotopy Kan equivalence identifies models whose behaviors can be deformed into one another without changing observable decision consequences.  It generalizes strict Kan bisimulation by replacing exact equality with equivalence up to an admissible deformation:
\begin{center}
\emph{observational equivalence corresponds to homotopy equivalence of Kan extensions.}
\end{center}

This viewpoint is especially natural for causal models.  Two structural causal models may disagree internally while inducing the same observational or interventional semantics over the contexts available to the learner.  In UDL terms, such models should not necessarily be identified as syntactic objects; rather, they belong to the same homotopy class when their Kan-extended observational functors agree up to the chosen equivalence.

The same pattern appears in representation learning and reinforcement learning.  Different state representations, value bases, or approximate models may compute decision semantics that are not literally identical, but remain equivalent after quotienting or deformation by behavior-preserving maps.  Thus the space of UDL objects naturally carries higher-order structure: objects are decision models, morphisms are structure-preserving maps, and homotopies are observational equivalences between such maps.  This suggests that Universal Decision Learners can be organized not only as a category of models, but as a higher category of models, transformations, and equivalences.

\section{Special Cases}
\label{sec:special-cases}

\subsection{From Universal Decision Models to Learners}

UDM specifies the static architecture of a decision problem: decision makers,
their product decision space, information fields, observation maps, objectives,
and a solvability question \citep{sm:udm}.  UDL adds a typed extension problem
on top of that architecture.  An observation functor records the behavioral
data available locally; a probe category records candidate continuations or
consistency tests; and a global context category records where a complete
decision semantics is required.  The relationship is summarized in
Table~\ref{tab:udm-udl}.

\begin{table}[t]
\centering
\small
\begin{tabular}{p{0.29\linewidth}p{0.61\linewidth}}
\toprule
\textbf{UDM component} & \textbf{Role in a UDL instance}\\
\midrule
Decision makers and product decisions &
Objects assigned policies, actions, values, or feasible decision sets.\\
Information fields and observation maps &
The observed category $\obs$ and the map $J:\obs\to\probe$ specifying which
probes have empirical or local support.\\
Dynamics, objectives, and constraints &
The local functor $F$ and the probe semantics $H=\Lan_JF$.\\
Solvability &
Existence of a global model $G:\ctx\to\dec$ whose comparison
$G\to\Ran_RH$ is an isomorphism (or an application-specific approximation).\\
Behavior-preserving abstraction &
The semantic kernel quotient of the declared global semantics $H$.\\
\bottomrule
\end{tabular}
\caption{UDM supplies the decision architecture; UDL supplies an
extension-and-comparison semantics for partial behavioral data.}
\label{tab:udm-udl}
\end{table}

\subsection{Planning}

For deterministic shortest-path planning, let $\obs$ contain an initial state
and observed weighted edges, let $\probe$ be the path category generated by
those edges, and let $J:\obs\to\probe$ insert the local data.  With costs in the
min-plus enrichment, the pointwise left Kan extension at a vertex $v$ is
\[
(\Lan_JF)(v)
=
\min_{\pi:s_0\rightsquigarrow v}
\sum_{e\in\pi}w(e).
\]
Thus exhaustive path enumeration, Bellman--Ford relaxation, and Dijkstra's
algorithm (under nonnegative weights) are different computations of the same
min-plus extension.  This is an exact left-Kan instance; no right stage is
needed unless additional global feasibility constraints are imposed.

\subsection{Reinforcement Learning}

Reinforcement learning emphasizes the right-Kan stage.  Section~\ref{sec:url}
constructs an explicit probe category for a finite discounted MDP and derives
the max--expectation--discount Bellman operator pointwise.  Value iteration and
policy iteration then have the same semantic target but different
computational traces.

\subsection{Decentralized Decision and Information Structures}

In a decentralized UDM, the product decision space
$U=\prod_i U_i$ is paired with information fields describing what each
decision maker observes.  Let $\probe$ contain local policy probes
$q_i$ indexed by those information fields, and let $F(q_i)$ be the admissible
local policies measurable with respect to decision maker $i$'s information.
In a category of feasible policy sets ordered by inclusion, a right-Kan limit
is their compatible intersection after transport to the product decision
space.  A global section is therefore a joint policy whose components obey all
declared information restrictions.  This construction organizes the
compatibility part of UDM solvability; it does not by itself prove that an
optimal decentralized policy exists or is computationally tractable.
Appendix~\ref{app:decentralized} gives the complete finite construction.

\subsection{Game-Theoretic Equilibrium}

Let $\Sigma=\prod_i\Sigma_i$ be the mixed-strategy space of a finite game.
For each player $i$, define the best-response constraint set
\[
B_i=\{\sigma\in\Sigma:
\sigma_i\in\operatorname{BR}_i(\sigma_{-i})\}.
\]
Take one probe object $q_i$ per player, arrows from a global-profile context
to each $q_i$, and let the probe functor return $B_i$.  In the poset of subsets
of $\Sigma$ ordered by inclusion, the pointwise right-Kan limit at the global
context is
\[
\bigcap_i B_i,
\]
the set of Nash equilibria.  This does not provide a new equilibrium algorithm;
it identifies the exact global object against which best-response,
fictitious-play, or other iterative procedures may be compared
\citep{nash,vonneumann1947,Maschler_Solan_Zamir_2013}.
Appendix~\ref{app:games} also derives correlated equilibrium by changing the
probe family.

\subsection{Online Learning}

In online learning \citep{DBLP:conf/stoc/BansalJ0S20}, contexts are histories
\[
h_t=(a_1,\ell_1,\ldots,a_{t-1},\ell_{t-1}),
\]
with morphisms given by history extension.  In the additive enrichment, left
Kan extension along a realized history gives cumulative loss
$L_T(a)=\sum_{t\le T}\ell_t(a)$.  A comparator probe maps each fixed admissible
action to its cumulative loss, and the external-regret statistic is the
comparison
\[
\operatorname{Regret}_T
=
\sum_{t\le T}\ell_t(a_t)
-
\min_{a\in A}\sum_{t\le T}\ell_t(a).
\]
Sublinear regret is therefore a quantitative approximation guarantee relative
to the comparator extension, not an automatic consequence of the existence of
a Kan extension.  Appendix~\ref{app:online} constructs both stages and the
resulting comparison gap explicitly.

\subsection{Causal Intervention}

Causal decision-making \citep{pearl-book} asks how local mechanisms change
under intervention and whether a query is identified from available data.
A UDL instance can organize already specified local mechanisms, adjustment
functionals, or interventional regimes as probes and compare their induced
query values across compatible causal models.  A query is identified only when
all models allowed by the stated causal assumptions and observational law give
the same value.  Kan extension supplies bookkeeping and comparison maps; it
does not replace consistency, positivity, exchangeability, a valid adjustment
criterion, or do-calculus.  This qualification separates categorical
organization from causal identification.  Appendix~\ref{app:causal} gives a
compatible-model construction in which the right-Kan value is a singleton
exactly when the declared scalar query is identified.

\section{Universal Reinforcement Learning}
\label{sec:url}

We now specialize the UDL framework to reinforcement learning.  This section is included in the main body because RL is the cleanest and most familiar fixed-point instance of the general construction: local transition and reward information is extended into globally coherent value semantics. A more extended treatment of URL is given in \citep{mahadevancategoriesforagi}. 

\subsection{MDPs as Coalgebras}

A discounted Markov decision process may be viewed as a coalgebra
\[
\alpha:S\to \mathcal{P}(A\times \mathbb{R}\times S)
\]
in the deterministic case, or
\[
\alpha:S\to (A\to \mathcal{D}(\mathbb{R}\times S))
\]
in the stochastic case, where $\mathcal{D}$ is the probability distribution functor.  The coalgebra unfolds each state into its available actions, rewards, and successor-state distributions.  This is the dynamical substrate over which UDL defines value semantics.

\subsection{The Discounted Bellman Operator as a Pointwise Right Kan Extension}
\label{sec:mdp-ran}

We give an exact construction, rather than relying on an analogy with
``consistency.''  Let
\[
M=(S,\{A_s\}_{s\in S},P,r,\gamma)
\]
be a finite MDP, where $0\leq\gamma<1$ and
$r(s,a)=\sum_{s'}P(s'|s,a)r(s,a,s')$ is bounded.  Let $\probe$ be the
discrete category with one object $q_{s,a}$ for every admissible state--action
pair.  Let $\ctx$ have the objects of $\probe$ together with one object for
each state $s$, and add a morphism
\[
s\longrightarrow q_{s,a}
\]
for every $a\in A_s$.  Write $R:\probe\hookrightarrow\ctx$ for the inclusion.
Let $\mathcal S$ denote the discrete category on the state set and
$I:\mathcal S\hookrightarrow\ctx$ the inclusion of state objects.

Take $\dec$ to be the thin category of extended real numbers in reverse
numerical order.  A finite categorical limit in $\dec$ is therefore a
numerical maximum.  For each candidate value function
$V:S\to\mathbb{R}$, define the probe functor
\[
F_V:\probe\to\dec,\qquad
F_V(q_{s,a})
=
r(s,a)+\gamma\sum_{s'\in S}P(s'|s,a)V(s').
\]
The comma category $(s\downarrow R)$ is discrete and has exactly one object
for each action in $A_s$.  Hence the pointwise right-Kan formula gives
\begin{align}
(\Ran_R F_V)(s)
&=
\lim_{(s\to Rq)\in(s\downarrow R)}F_V(q) \\
&=
\max_{a\in A_s}
\left\{
r(s,a)+\gamma\sum_{s'\in S}P(s'|s,a)V(s')
\right\}\\
&=T(V)(s),
\end{align}
where $T$ is the Bellman optimality operator.  Thus the \emph{backup}
$V\mapsto T(V)$ is exactly the restriction to state objects of a pointwise
right Kan extension in this declared category.  Bellman \emph{optimality} is
the separate fixed-point equation
\[
V^\ast=I^\ast\Ran_R F_{V^\ast}=T(V^\ast).
\]
Because $T$ is a $\gamma$-contraction in the sup norm, Banach's fixed-point
theorem gives a unique $V^\ast$.

This is also a literal instance of the type-correct UDL definition.  For a
known finite MDP, take $\obs=\probe$, $J=\mathrm{id}_{\probe}$, and local data $F_V$.
Then
\[
I^\ast\UDL_{J,R}(F_V)
=
I^\ast\Ran_R(\Lan_{\mathrm{id}_{\probe}}F_V)
\cong
I^\ast\Ran_R F_V
=T(V).
\]
For a learned model, $\obs$ can instead contain sampled reward and transition
records and $J:\obs\to\probe$ can group them by state--action probe.  A chosen
statistical or enriched left-Kan stage produces an estimate
$\widehat F_V=\Lan_JF_V^{\mathrm{obs}}$; the right stage then gives
$\Ran_R\widehat F_V$.  The categorical universal property alone does not
choose an estimator or supply concentration bounds: those require the
probabilistic assumptions of the learning problem.

\subsection{Solution Methods as Approximate Kan Computation}

Exact dynamic-programming algorithms and stochastic approximation methods
must be distinguished.  In the finite, known-model setting, value iteration
and policy iteration compute the unique fixed point just derived.  Sampled
methods target the same Bellman equation only under their usual coverage,
stepsize, tabular, or function-approximation assumptions.
\[
\begin{array}{lll}
\toprule
\text{Method} & \text{Computation} & \text{UDL interpretation}\\
\midrule
\text{Value iteration} & V_{k+1}=TV_k
  & \text{iteration of the exact Kan-derived backup}\\
\text{Policy iteration} & \text{evaluation/improvement}
  & \text{another exact route to the same optimum}\\
\text{Monte Carlo} & \text{return averaging}
  & \text{empirical path-value estimation}\\
\text{TD learning} & \text{bootstrapped updates}
  & \text{stochastic Bellman-residual control}\\
\text{Q-learning} & \text{off-policy sampled backups}
  & \text{tabular convergence under standard conditions}\\
\bottomrule
\end{array}
\]

\begin{theorem}[Agreement of two exact algorithms]
\label{prop:algorithm-agreement}
For the finite discounted MDP of Section~\ref{sec:mdp-ran}, exact value
iteration and exact policy iteration compute the same state-value semantics
induced by the UDL:
\[
V^\ast
=
\operatorname{fix}(T)
=
\operatorname{fix}\!\left(V\mapsto I^\ast\Ran_RF_V\right).
\]
\end{theorem}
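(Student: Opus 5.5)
The plan is to reduce the statement to three separate facts: the identification of $T$ with the Kan-derived backup, convergence of value iteration, and termination of policy iteration at a fixed point of $T$.

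\textbf{Step 1: the Kan-derived operator is $T$.} By the computation of Section~\ref{sec:mdp-ran}, for every $V:S\to\mathbb{R}$ the comma category $(s\downarrow R)$ is discrete with one object per $a\in A_s$. Limits in $\dec$ are maxima, so $I^\ast\Ran_RF_V=T(V)$ pointwise. Hence the two operators $V\mapsto T(V)$ and $V\mapsto I^\ast\Ran_RF_V$ are literally equal on $\mathbb{R}^S$. This gives the second equality $\operatorname{fix}(T)=\operatorname{fix}(V\mapsto I^\ast\Ran_RF_V)$ with no further work.

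\textbf{Step 2: $T$ has a unique fixed point and value iteration reaches it.} For $V,W$ and each $s$, I will use $|\max_a x_a-\max_a y_a|\le\max_a|x_a-y_a|$ together with $\sum_{s'}P(s'|s,a)=1$. This gives $\|TV-TW\|_\infty\le\gamma\|V-W\|_\infty$. Since $\mathbb{R}^S$ is complete under the sup norm, Banach's theorem yields a unique fixed point $V^\ast$, and $V_k=T^kV_0\to V^\ast$ from any $V_0$. So exact value iteration computes $V^\ast$.

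\textbf{Step 3: policy iteration computes the same $V^\ast$.} For a deterministic stationary policy $\pi$, let $T_\pi V(s)=r(s,\pi(s))+\gamma\sum_{s'}P(s'|s,\pi(s))V(s')$. This operator is also a $\gamma$-contraction, so its unique fixed point $V^\pi$ is what the evaluation step computes. The improvement step picks $\pi'$ greedy for $V^\pi$, so $T_{\pi'}V^\pi=TV^\pi\ge T_\pi V^\pi=V^\pi$. Since $T_{\pi'}$ is monotone, iterating gives $V^{\pi'}=\lim_n T_{\pi'}^nV^\pi\ge V^\pi$. Equality holds only if $TV^\pi=V^\pi$.

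There are finitely many deterministic policies, because $S$ and each $A_s$ are finite. If ties are broken so that $\pi$ is kept whenever it is already greedy, values strictly increase until that rule stops the algorithm, and no policy repeats. Policy iteration therefore terminates at some $\pi$ with $TV^\pi=V^\pi$. By uniqueness in Step 2, $V^\pi=V^\ast$.

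\textbf{Main obstacle.} The hardest step is the termination and monotonicity argument in Step 3. The strict-improvement claim depends on the tie-breaking convention: without the rule that keeps $\pi$ when it is already greedy, policy iteration can cycle among equally valued greedy policies. The monotonicity claim uses that $T_{\pi'}$ is order-preserving and that monotone limits preserve inequalities. I will handle this explicitly rather than cite it. Everything else follows from Banach's theorem and the pointwise Kan formula already established.
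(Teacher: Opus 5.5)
Your proposal is correct and follows essentially the same route as the paper's proof. You identify $I^\ast\Ran_RF_V$ with $T(V)$, use the $\gamma$-contraction and Banach's theorem for value iteration, and use policy improvement with finiteness of deterministic policies and conservative tie-breaking for policy iteration, with uniqueness of the fixed point giving $V^\ast$. The only difference is that you prove the contraction estimate and the monotone-improvement inequalities yourself, where the paper cites them as standard.
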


\begin{proof}
For value iteration, the equality $I^\ast\Ran_RF_V=T(V)$ proved above and the
$\gamma$-contraction theorem imply
$T^k(V_0)\to V^\ast$ for every bounded $V_0$.  Exact policy iteration
evaluates each stationary policy by the unique fixed point of its discounted
policy operator and applies greedy improvement.  The policy-improvement
theorem gives monotone improvement and, because a finite MDP has finitely many
deterministic stationary policies, termination at an optimal policy (using
conservative tie breaking).  Its value satisfies $T(V)=V$, and uniqueness of
the contraction fixed point makes it $V^\ast$.  The algorithms have different
computational traces but the same Kan-derived backup and fixed-point semantics.
\end{proof}

A general comparison criterion is now explicit: an algorithm representing
$\widetilde V$ implements the declared state-value semantics exactly when its
comparison with $I^\ast\Ran_RF_{\widetilde V}$ is an isomorphism and it satisfies
the fixed-point condition.  In a metric enrichment, the corresponding
residual can converge to zero.  This formulation separates exact dynamic
programming, stochastic approximation, and restricted function approximation
without claiming identical finite-sample behavior.  Appendix
\ref{app:function-approx-url} discusses the last case.

\subsection{Bisimulation and Representation}

Ordinary MDP bisimulation identifies states whose immediate rewards agree and
whose transition probabilities agree after aggregation over equivalence
classes.  Equality of one optimal value function is weaker: two states may
have the same $V^\ast$ while having different rewards or transition laws.
Consequently, the semantic-kernel theorem recovers an MDP-bisimulation
quotient only when the declared semantic functor contains a separating family
of probes recording the relevant reward and class-aggregated transition data
(or an equivalently rich family of policy/value tests).  If the functor records
only $V^\ast$, its kernel is merely the coarsest quotient preserving that
particular value semantics.  This distinction prevents value equivalence from
being mislabeled as probabilistic bisimulation.

\subsection{Beyond MDPs}

The coalgebraic formulation is not limited to classical MDPs.  Partially
observed systems, predictive state representations, non-Markovian histories,
and asynchronous distributed processes can be modeled by changing the
coalgebra and the declared probe and context categories
\citep{jacobs:book,rutten2000universal,SOKOLOVA20115095}.  Each case still
requires a separate existence and pointwise calculation; the finite-MDP
derivation above does not establish those results automatically.  The
reusable specification pattern is
\[
\begin{array}{c}
\text{coalgebraic dynamics}
\;+\;
\text{local reward/observation data}\\
\Longrightarrow
\text{Kan-extended value semantics}.
\end{array}
\]

\section{Discussion and Limitations}

The UDL perspective separates semantics from algorithms.  Once the categories,
functors, enrichment, and comparison maps have been declared, a Kan extension
identifies a canonical semantic object.  Dynamic programming, stochastic
approximation, equilibrium iteration, causal adjustment, or regret
minimization may then compute or approximate the corresponding
application-specific object.  UDL in its current formulation does not propose
a new numerical algorithm, and the existence of a Kan extension does not by
itself establish statistical consistency, convergence rates, causal
identification, or computational tractability.

This limitation suggests three future directions.  First, new decision algorithms can be compared by asking whether they approximate the same Kan extension.  Second, representation learning can be studied through Kan-invariant quotients: a representation is behaviorally adequate when it preserves the universal extension.  Third, richer settings such as partially observed, distributed, hierarchical, or non-Markovian decision processes can be handled by changing the context category rather than abandoning the semantic framework. 

\section{Conclusion}

Universal Decision Learners recast a family of decision specifications as
typed canonical extensions.  A local decision functor records what is directly
known; a left Kan extension along $J:\obs\to\probe$ populates decision probes;
and a right Kan extension along $R:\probe\to\ctx$ assembles those probes over
global contexts.  The distinct indexes make the staged construction
composable.  In the finite discounted-MDP instance, the pointwise right Kan
extension is exactly the Bellman backup, while optimality is its unique
contraction fixed point.  The corrected semantic-kernel theorem states
precisely what a declared semantics-preserving quotient preserves.  The result
is a comparison language for planning, reinforcement learning, causal
queries, online learning, decentralized decisions, and strategic equilibrium,
with application-specific assumptions kept explicit.

\newpage 

\appendix

\section{Changes in This Version}
\label{app:changes}

This version makes the following substantive corrections and extensions.
\begin{enumerate}
\item \textbf{Type-correct staged definition.}
The Universal Decision Learner is now defined using distinct rollout and
consistency indexes,
\[
J:\obs\to\probe,\qquad R:\probe\to\ctx,\qquad
\UDL_{J,R}(F)=\Ran_R(\Lan_JF).
\]
The earlier untyped composite that applied a right Kan extension to a functor
with the wrong domain has been removed.  When rollout is formed directly on
\(\ctx\), the required restriction \(R^\ast\) is written explicitly.
\item \textbf{Corrected quotient theorem.}
The abstraction result is now a semantic-kernel quotient theorem.  It states
the correct factorization direction and distinguishes quotients preserving a
declared semantic functor from stronger notions such as MDP bisimulation.
\item \textbf{Concrete discounted-MDP derivation.}
The finite discounted Bellman backup is derived as the restriction of a
pointwise right Kan extension on an explicitly constructed category.
Bellman optimality is separately identified as the unique fixed point of the
\(\gamma\)-contraction; the paper no longer conflates backup with optimality.
\item \textbf{Broader decision examples and limitations.}
The planning, strategic-equilibrium, online-learning, causal-query, and
decentralized-decision examples are expanded in the supplement.  These are
presented as typed application schemas with application-specific assumptions,
not as claims that the universal property alone supplies numerical algorithms
or convergence guarantees.
\end{enumerate}

\section{Supplementary Materials: Coalgebras and Metric Coinduction}
\label{app:coinduction}

The notion of \emph{coinduction} may be unfamiliar to some readers, but it plays a central role in the categorical view of reinforcement learning used in this paper.  Informally, induction constructs objects from smaller pieces, whereas coinduction characterizes objects by the observable behaviors they sustain.  Classical optimization and stochastic approximation are often presented inductively: start with an initial estimate and refine it step by step.  Reinforcement learning also has a coinductive semantics: value functions are characterized by consistency with future behavior, and hence by fixed points of behavioral operators.

\paragraph{Induction and coinduction.}
In category theory, induction is associated with initial algebras and well-founded constructions.  Coinduction is associated with final coalgebras and potentially non-well-founded structures.  While induction builds finite objects by recursion, coinduction specifies possibly infinite or self-referential objects through their unfolding behavior.
\begin{quote}
\emph{Induction constructs an object from its generators; coinduction characterizes an object by the observations it supports.}
\end{quote}
This makes coinduction especially well suited to dynamical systems, infinite processes, and sequential decision problems.

\paragraph{From learning in the limit to coinductive inference.}
Classical work in machine learning studied identification or learnability in the limit, later extended to the statistical setting by Valiant \citep{DBLP:journals/cacm/Valiant84}.  This tradition is close to inductive inference, where one seeks to recover a target object by finite approximation.  Coinductive inference generalizes the viewpoint to settings where the target is better understood as a process than as a finite structure.  The learner seeks not merely to enumerate a set, but to identify a behaviorally equivalent system.  Universal coalgebras provide the corresponding mathematical language.

\subsection{Coalgebras as Behavioral Models}

A coalgebra for an endofunctor
\[
B:\mathcal{C}\to\mathcal{C}
\]
is a map
\[
\alpha:X\to B(X),
\]
where $X$ is the carrier object and $B$ specifies the type of observable dynamics.  Coalgebras represent systems by how they evolve, rather than by how they are constructed.  They cover deterministic and nondeterministic automata, probabilistic transition systems, game-theoretic and causal processes, and reinforcement learning environments.

\paragraph{Example: labeled transition systems.}
A labeled transition system consists of states $S$, labels $A$, and a relation
\[
\to_S\;\subseteq\;S\times A\times S.
\]
Define the endofunctor
\[
\mathcal{B}(X)=\mathcal{P}(A\times X).
\]
Then the transition system can be represented as a coalgebra
\[
\alpha_S:S\to\mathcal{B}(S),
\qquad
s\mapsto \{(a,s')\mid s\xrightarrow{a}s'\}.
\]
The coalgebra records the observable one-step behavior of each state.

\paragraph{Final coalgebras and behavioral equivalence.}
A final coalgebra is a universal object in the category of coalgebras: every coalgebra admits a unique morphism into it.  This morphism identifies the behavior of a state by its canonical unfolding.  Two states are behaviorally equivalent precisely when they map to the same point of the final coalgebra, or more generally when they are related by bisimulation.  In the language of the main paper, this is a close cousin of Kan invariance: behavioral equivalence arises when different local descriptions induce the same global semantic extension.

\subsection{Coinduction, Kan Extensions, and RL}

The relevance of coinduction to reinforcement learning can now be stated succinctly.  In RL, the value of a state is not defined directly by finite construction, but by a consistency condition relating it to future states.  This is the essence of the Bellman equation.  From the UDL perspective:
\begin{itemize}
\item local reward and transition structure define a functor;
\item global value semantics arise as a right Kan extension;
\item coinduction characterizes the fixed point of this extension process.
\end{itemize}
\begin{quote}
\emph{Coinduction is the semantic principle underlying right Kan extensions in sequential decision systems.}
\end{quote}
Thus, coinduction is not merely an abstract alternative to induction: it is the natural language for value functions, policies, and long-horizon behavior.

\subsection{Dynamic Programming and Metric Coinduction}
\label{app:metric-coinduction}

In reinforcement learning and dynamic programming, convergence is often analyzed in metric spaces.  The classical argument shows that the Bellman operator is a contraction and therefore admits a unique fixed point.  Metric coinduction \citep{kozen} shows that this familiar argument is naturally coinductive.

Let $(V,d)$ be a metric space, and let
\[
F:V\to V
\]
be contractive, meaning that for some $0\leq c<1$,
\[
d(F(u),F(v))\leq c\,d(u,v)
\qquad \forall u,v\in V.
\]
More generally, $F$ is \emph{eventually contractive} if $F^n$ is contractive for some $n\geq1$.  The standard fixed-point iteration
\[
u,\;F(u),\;F^2(u),\ldots
\]
then converges to a unique fixed point $u^\ast$.

\begin{definition}[Coinduction rule]
Let $\phi$ be a closed nonempty subset of a metric space $V$, and let $F:V\to V$ be an eventually contractive map that preserves $\phi$.  Then the unique fixed point $u^\ast$ of $F$ lies in $\phi$.
\end{definition}

This rule is coinductive because it proves a property of the fixed point not by constructing the point explicitly, but by showing that the property is preserved by the unfolding dynamics.  The fixed point belongs to every closed invariant set preserved by the operator.  This is precisely the form of reasoning used in RL: the Bellman operator preserves a class of admissible value functions, its unique fixed point is characterized coinductively, and convergence follows from the metric structure.

Coalgebraically, one organizes contractive systems into a category of coalgebras in which the fixed point corresponds to a final object.  The coinduction rule then states that any property preserved by the coalgebra structure must hold of the final coalgebra.  Metric coinduction is therefore the analytic counterpart to categorical finality.

In Universal Reinforcement Learning, environments are modeled as coalgebras, value functions are characterized by right Kan extensions, Bellman operators provide the metric realization of these extensions, and stochastic approximation algorithms compute them iteratively.
\begin{center}
\emph{Metric coinduction provides the convergence theory for computing coalgebraic right Kan semantics.}
\end{center}

\section{Background: Function Approximation in URL}
\label{app:function-approx-url}

The main paper describes value functions as right Kan extensions of local reward and transition data.  Classical dynamic programming assumes that this extension can be represented exactly.  Function approximation changes the problem: the learner searches inside a restricted hypothesis class, such as linear features, graph-based representations, or neural networks.  From the UDL viewpoint, function approximation is therefore \emph{approximate Kan computation}: rather than compute $\Ran_JF$ exactly, one chooses a parameterized functor
\[
V_\theta:\ctx\to\dec_\theta
\]
and trains it to approximate the universal object
\[
V^\ast \simeq \Ran_JF.
\]

\subsection{Projected Right Kan Semantics}

Let $\mathcal{H}\subseteq \dec^\ctx$ be a hypothesis class of representable global decision models.  Exact URL seeks a globally consistent value semantics $\widehat F=\Ran_JF$.  Approximate URL seeks
\[
V_\theta \in \mathcal{H}
\]
such that $V_\theta$ is close to $\widehat F$ under a task-relevant discrepancy.  Equivalently, if $\Pi_{\mathcal{H}}$ denotes projection or approximation into the hypothesis class, then function approximation targets a projected right Kan equation
\[
V_\theta \approx \Pi_{\mathcal{H}}\,\Ran_JF.
\]
In Bellman form this becomes the familiar projected fixed-point problem
\[
V_\theta \approx \Pi_{\mathcal{H}}\,T(V_\theta),
\]
where $T$ is the Bellman operator.  The categorical content is that the fixed point remains a right Kan consistency condition, but consistency is now enforced only after projection into the representable class.

\subsection{Representations as Approximate Coalgebra Morphisms}

Function approximation also changes the state space.  Let an environment be represented by a coalgebra
\[
\gamma:X\to T(X),
\]
where $T$ packages actions, observations, rewards, and successor structure.  A learned representation is a map
\[
F_\theta:X\to Z,
\]
together with a latent dynamics model
\[
\widehat\gamma_\phi:Z\to T(Z).
\]
Ideally, $F_\theta$ is a coalgebra morphism, meaning that the following square commutes:
\[
\widehat\gamma_\phi\circ F_\theta
=
T(F_\theta)\circ \gamma.
\]
In deep URL this equality is relaxed:
\[
\widehat\gamma_\phi\circ F_\theta
\approx
T(F_\theta)\circ \gamma.
\]
Thus representation learning is the search for an approximate coalgebra morphism that preserves the right Kan value semantics relevant to the decision problem.

\subsection{Loss Decomposition}

This perspective separates three roles in a function-approximation objective:
\[
\mathcal{L}
=
\mathcal{L}_{\mathrm{Kan}}
+
\lambda_{\mathrm{morph}}\mathcal{L}_{\mathrm{morph}}
+
\lambda_{\mathrm{struct}}\mathcal{L}_{\mathrm{struct}}.
\]
Here $\mathcal{L}_{\mathrm{Kan}}$ is a Bellman or TD loss enforcing approximate right Kan consistency; $\mathcal{L}_{\mathrm{morph}}$ penalizes failure of the coalgebra square to commute; and $\mathcal{L}_{\mathrm{struct}}$ encodes additional relational, graph, or diagrammatic constraints.  The objective is not merely to fit rewards, but to learn a representation in which local transition data extend coherently to global value semantics.

For a sample transition $(x,a,r,x')$, a typical TD component has the form
\[
\mathcal{L}_{\mathrm{Kan}}
=
\bigl(Q_\psi(F_\theta(x),a)
-
(r+\gamma \max_{a'}Q_\psi(F_\theta(x'),a'))\bigr)^2.
\]
A morphism or dynamics component may take the form
\[
\mathcal{L}_{\mathrm{morph}}
=
\left\|
\widehat\gamma_\phi(F_\theta(x),a)
-
T(F_\theta)(\gamma(x,a))
\right\|^2,
\]
with the precise expression depending on whether the transition model is deterministic, stochastic, partially observed, or distributional.

\subsection{Diagrammatic Regularization}

Unconstrained neural approximators may satisfy a Bellman objective while violating structural coherence.  URL therefore suggests adding penalties derived from declared diagrams in the context category.  If $E$ is a declared transition or neighborhood relation over states, and
\[
H=[h_1,\ldots,h_{|X|}]^\top,\qquad h_i=F_\theta(x_i),
\]
then a graph-level diagrammatic penalty is
\[
\mathcal{L}_{\mathrm{struct}}
=
\frac{1}{|E|}
\sum_{(i,j)\in E}
\|h_i-h_j\|_2^2.
\]
Equivalently, with graph Laplacian $L=D-A$,
\[
\mathcal{L}_{\mathrm{struct}}
=
\frac{1}{|E|}\operatorname{tr}(H^\top L H).
\]
Higher-order diagrams, such as commuting action paths or typed transition motifs, can similarly be compiled into residuals that penalize noncommuting squares or triangles.  In categorical terms, diagrammatic backpropagation enforces approximate naturality and approximate preservation of the diagrams along which Kan extensions are computed.

\subsection{Geometric Transformers and Relational Bias}

Geometric or relational architectures can be read as choosing a hypothesis class $\mathcal{H}$ that is already adapted to the context category.  Message passing over a transition graph, attention over typed relations, or equivariant processing over structured observations gives the approximator access to the morphisms that organize the decision problem.  The role of such architectures is to bias the learner toward functions for which the projected right Kan equation is easier to satisfy.

This places deep URL in a lineage with proto-value functions and successor representations.  Proto-value functions use graph Laplacian eigenvectors to construct smooth bases over an MDP transition graph \citep{mahadevan2005proto}.  Successor representations encode discounted future occupancy structure \citep{dayan1993successor}.  URL generalizes these ideas by treating transition-aware representation learning as approximate preservation of Kan-extended value semantics:
\[
\text{representation geometry}
\quad\Longleftrightarrow\quad
\text{structure preserved by approximate Kan extension}.
\]

\subsection{Interpretation}

Function approximation in URL can be summarized as follows:
\begin{itemize}
\item exact value semantics are right Kan extensions;
\item approximate value semantics are projected right Kan extensions;
\item learned representations are approximate coalgebra morphisms;
\item Bellman losses enforce semantic consistency;
\item structural losses enforce diagrammatic coherence.
\end{itemize}
This gives a categorical reading of deep RL with structured representations.  A neural learner is not simply fitting a scalar objective; it is approximating a universal extension while preserving as much of the underlying coalgebraic and diagrammatic structure as its hypothesis class allows.

\section{Supplementary Material: Worked Non-RL Specializations}
\label{app:non-rl}

This appendix makes the constructions in Sections~8.4--8.7 explicit.  These
finite examples are deliberately modest.  They show how a declared UDL
recovers a familiar semantic object; they do not turn the universal property
into an existence, identification, convergence-rate, or tractability theorem.

\subsection{Decentralized Decisions: Information-Compatible Policies}
\label{app:decentralized}

Consider a finite decentralized team with world set $\Omega$, agents
$i=1,\ldots,n$, finite action sets $U_i$, and observation maps
$O_i:\Omega\to Y_i$.  Let
\[
\Pi_i=\{\pi_i:\Omega\to\Delta(U_i)\},
\qquad
\Pi=\prod_{i=1}^n\Pi_i
\]
be the ambient behavioral-policy spaces.  The information available to agent
$i$ imposes the constraint
\[
C_i
=
\left\{
\pi\in\Pi:
O_i(\omega)=O_i(\omega')
\Longrightarrow
\pi_i(\omega)=\pi_i(\omega')
\right\}.
\]
Thus $C_i$ consists of joint policy profiles whose $i$th component is
measurable with respect to the partition induced by $O_i$.

Let $\probe$ be the discrete category with objects $q_1,\ldots,q_n$.  Let
$\ctx$ contain those objects and a global team context $g$, with one arrow
$g\to q_i$ per agent, and let $R:\probe\hookrightarrow\ctx$ be the inclusion.
Take
\[
\dec=(\mathcal P(\Pi),\subseteq),
\qquad
H(q_i)=C_i.
\]
Since limits in this poset are intersections and
$(g\downarrow R)$ is the discrete set of agents, the pointwise formula is
\[
(\Ran_RH)(g)
=
\lim_{(g\to Rq_i)\in(g\downarrow R)}H(q_i)
=
\bigcap_{i=1}^n C_i.
\]
The right-Kan value is therefore exactly the set of joint policies satisfying
all decentralized information restrictions.

For a fully specified UDM, this is a UDL instance with
$\obs=\probe$, $J=\mathrm{id}_{\probe}$, and $F=H$:
\[
\UDL_{J,R}(F)
\cong
\Ran_RH.
\]
Additional feasibility constraints can be represented by more probe objects
and intersected in the same limit.  Given a team cost
$c:\Pi\to\mathbb R$, the optimization problem is then
\[
\min_{\pi\in(\Ran_RH)(g)}c(\pi).
\]
The Kan extension constructs the information-compatible feasible set.  It
does not assert that this set is nonempty, that a minimizer exists in an
infinite model, or that decentralized optimization is computationally easy.
Those are separate UDM solvability and optimization questions
\citep{witsenhausen:1975,sm:udm}.

\subsection{Finite Games: Nash and Correlated-Equilibrium Constraints}
\label{app:games}

Let $\Sigma=\prod_i\Delta(A_i)$ be the mixed-strategy space of a finite
normal-form game.  For player $i$, define the best-response constraint set
\[
B_i
=
\left\{
\sigma\in\Sigma:
u_i(\sigma_i,\sigma_{-i})
\geq
u_i(\tau_i,\sigma_{-i})
\ \text{for every }\tau_i\in\Delta(A_i)
\right\}.
\]
Use the same probe shape as above: $\probe$ is discrete on player probes
$q_i$, $\ctx$ adds a global-profile object $g$ with arrows $g\to q_i$, and
$R$ includes the probes.  In
$\dec=(\mathcal P(\Sigma),\subseteq)$, set $H(q_i)=B_i$.  Then
\[
(\Ran_RH)(g)
=
\bigcap_i B_i
=
\operatorname{NE}(G).
\]
Thus Nash equilibrium is exactly the pointwise right-Kan assembly of the
players' simultaneous best-response constraints.  Nash's theorem supplies
nonemptiness for finite games; nonemptiness does not follow merely from the
existence of the limit \citep{nash}.

The construction also separates semantics from algorithms.  A profile
$\widehat\sigma$ returned by fictitious play, regret matching, or another
procedure computes the exact declared semantics only if
$\widehat\sigma\in(\Ran_RH)(g)$.  A numerical relaxation is the exploitability
\[
\epsilon(\widehat\sigma)
=
\max_i\sup_{\tau_i\in\Delta(A_i)}
\left[
u_i(\tau_i,\widehat\sigma_{-i})
-u_i(\widehat\sigma_i,\widehat\sigma_{-i})
\right].
\]
Exact membership is equivalent to $\epsilon(\widehat\sigma)=0$; convergence
of a particular algorithm requires its own analysis.

For correlated equilibrium, take the ambient set to be
$\Delta(\prod_i A_i)$ and one probe for every deviation
$(i,a_i,a_i')$.  The probe returns the half-space of distributions $\mu$
satisfying
\[
\sum_{a_{-i}}\mu(a_i,a_{-i})
\bigl[
u_i(a_i,a_{-i})-u_i(a_i',a_{-i})
\bigr]\geq0.
\]
The pointwise right-Kan limit is the intersection of all deviation
half-spaces, hence exactly the correlated-equilibrium polytope.  Changing the
probe family changes the equilibrium concept transparently.

\subsection{Online Learning: Rollout, Comparator, and Regret}
\label{app:online}

Fix a finite action set $A$ and a loss sequence
$\ell_t:A\to[0,\infty]$ for $t=1,\ldots,T$.  For each fixed action $a$, form a
chain of history objects
\[
h_{0,a}\longrightarrow h_{1,a}\longrightarrow\cdots
\longrightarrow h_{T,a},
\]
and assign edge $h_{t-1,a}\to h_{t,a}$ the cost $\ell_t(a)$.  In the
min-plus cost enrichment, composition adds edge costs.  Let $\obs$ contain the
initial objects $h_{0,a}$, let $\probe$ be the disjoint union of these weighted
chains, and let $J:\obs\hookrightarrow\probe$ be the inclusion.  With
$F(h_{0,a})=0$, the unique path to $h_{T,a}$ makes the pointwise enriched
left-Kan calculation
\[
H(h_{T,a})
=
(\Lan_JF)(h_{T,a})
=
\sum_{t=1}^T\ell_t(a)
=L_T(a).
\]
This is the rollout stage: it produces the terminal cumulative-loss probe for
each fixed comparator action.

Let $\ctx$ contain the entire rollout category $\probe$ and one additional
global comparator context $g$, with arrows $g\to q_a$ to the terminal objects
$q_a=h_{T,a}$.  Write $R:\probe\hookrightarrow\ctx$ for the inclusion.  The
only objects of $(g\downarrow R)$ are the terminal probes.  In the thin cost
category
$([0,\infty],\leq)$, finite limits are numerical minima, so
\[
(\Ran_RH)(g)
=
\min_{a\in A}L_T(a)
=
\min_{a\in A}\sum_{t=1}^T\ell_t(a).
\]
The two stages therefore recover the best fixed-action comparator.  If an
online algorithm selects $a_t$ and incurs
$A_T=\sum_{t=1}^T\ell_t(a_t)$, its external regret is the comparison gap
\[
\operatorname{Regret}_T
=
A_T-(\Ran_RH)(g).
\]
A no-regret guarantee is the quantitative statement
$\operatorname{Regret}_T=o(T)$.  It is not implied by either Kan universal
property; it must be proved from the algorithm and loss assumptions.  The UDL
contribution is to keep the comparator semantics fixed while allowing
different online algorithms to be compared against it.

\subsection{Causal Queries: Identifiability as Probe Agreement}
\label{app:causal}

Let $o$ denote an observational distribution together with a declared causal
model class and assumptions, and suppose the compatible model set
$\mathcal M(o)$ is nonempty.  Fix an intervention $x$ and a scalar causal
query
\[
Q_x:\mathcal M(o)\to\mathcal Y,
\qquad
M\longmapsto Q_x(M),
\]
such as $Q_x(M)=\mathbb E_M[Y\mid\operatorname{do}(X=x)]$.
Let $\probe$ be the discrete category on the compatible models $M$, and let
$\ctx$ add one observational context $o$ with an arrow $o\to M$ for every
$M\in\mathcal M(o)$.  Write $R:\probe\hookrightarrow\ctx$ for inclusion.
Take
\[
\dec=(\mathcal P(\mathcal Y),\subseteq),
\qquad
H_x(M)=\{Q_x(M)\}.
\]
The comma category $(o\downarrow R)$ is precisely the compatible-model
family, hence
\[
(\Ran_RH_x)(o)
=
\bigcap_{M\in\mathcal M(o)}\{Q_x(M)\}.
\]
This limit is a singleton $\{q\}$ exactly when all compatible models give the
same query value $q$.  If two compatible models disagree, the intersection is
empty.  Thus right-Kan probe agreement gives an exact yes/no representation
of identifiability relative to the declared model class.

For comparison, the colimit of the same discrete compatible-model diagram in
the powerset category is
\[
\bigcup_{M\in\mathcal M(o)}\{Q_x(M)\},
\]
the set of all query values compatible with the assumptions.  Equivalently,
this union can be realized as a pointwise left Kan extension in the covariant
context obtained by adjoining arrows $M\to o$.  It is the identified set; the
query is point-identified exactly when this union is a singleton.  The right
limit tests agreement, while the left colimit records the range of ambiguity.

As a concrete example, if a measured covariate $Z$ satisfies a valid
back-door criterion for the effect of $X$ on $Y$, the causal assumptions imply
that every compatible model has
\[
Q_x(M)
=
\sum_z
\mathbb E[Y\mid X=x,Z=z]\,P(Z=z).
\]
The right-Kan intersection is therefore the singleton containing this
adjustment functional.  Without the back-door or another valid identification
argument, the intersection need not be a singleton.  Kan extension packages
the agreement test; consistency, positivity, exchangeability, graphical
criteria, and do-calculus establish the agreement \citep{pearl-book}.

\section{Broader Impact}
\label{sec:broader_impact}

This work is theoretical and does not introduce a deployed system, dataset, or empirical model.  Its positive impact is conceptual: it may help researchers compare decision-making methods across reinforcement learning, causal inference, planning, and game theory using shared semantic criteria.  The main risk is misuse through overinterpretation: a universal categorical semantics does not by itself guarantee correct modeling assumptions, safe optimization, or reliable behavior in deployed decision systems.  Any applied use of UDL-style abstractions should therefore be paired with domain-specific validation, uncertainty analysis, and safeguards appropriate to the decision context.


\begin{thebibliography}{}

\bibitem[Bansal {\em et~al.}(2020)Bansal, Jiang, Singla, and Sinha]{DBLP:conf/stoc/BansalJ0S20}
Bansal, N., Jiang, H., Singla, S., and Sinha, M. (2020).
\newblock Online vector balancing and geometric discrepancy.
\newblock In K.~Makarychev, Y.~Makarychev, M.~Tulsiani, G.~Kamath, and J.~Chuzhoy, editors, {\em Proccedings of the 52nd Annual {ACM} {SIGACT} Symposium on Theory of Computing, {STOC} 2020, Chicago, IL, USA, June 22-26, 2020\/}, pages 1139--1152. {ACM}.

\bibitem[Bertsekas(2026)Bertsekas]{bertsekas:rlbook}
Bertsekas, D. (2026).
\newblock {\em A Course in Reinforcement Learning: 2nd Edition\/}.
\newblock Athena Scientific.

\bibitem[Dayan(1993)Dayan]{dayan1993successor}
Dayan, P. (1993).
\newblock Improving generalization for temporal difference learning: The successor representation.
\newblock {\em Neural Computation\/}, {\bf 5}(4), 613--624.

\bibitem[Jacobs(2016)Jacobs]{jacobs:book}
Jacobs, B. (2016).
\newblock {\em Introduction to Coalgebra: Towards Mathematics of States and Observation\/}, volume~59 of {\em Cambridge Tracts in Theoretical Computer Science\/}.
\newblock Cambridge University Press.

\bibitem[Kan(1958)Kan]{kan}
Kan, D. (1958).
\newblock Adjoint functors.
\newblock {\em Transactions of the American Mathematical Society\/}, {\bf 87}(2), 294--329.

\bibitem[Kozen and Ruozzi(2009)Kozen and Ruozzi]{kozen}
Kozen, D. and Ruozzi, N. (2009).
\newblock Applications of metric coinduction.
\newblock {\em Logical Methods in Computer Science\/}, {\bf 5}(3).

\bibitem[Kushner and Yin(2003)Kushner and Yin]{kushner2003stochastic}
Kushner, H.~J. and Yin, G.~G. (2003).
\newblock {\em Stochastic Approximation and Recursive Algorithms and Applications\/}.
\newblock Stochastic Modelling and Applied Probability. Springer.

\bibitem[Littman {\em et~al.}(2001)Littman, Sutton, and Singh]{littman2001predictive}
Littman, M.~L., Sutton, R.~S., and Singh, S.~P. (2001).
\newblock Predictive representations of state.
\newblock In {\em Advances in Neural Information Processing Systems 14\/}.

\bibitem[Mac~Lane(1971)Mac~Lane]{maclane:71}
Mac~Lane, S. (1971).
\newblock {\em Categories for the Working Mathematician\/}.
\newblock Springer-Verlag, New York.
\newblock Graduate Texts in Mathematics, Vol. 5.

\bibitem[Mahadevan(2005)Mahadevan]{mahadevan2005proto}
Mahadevan, S. (2005).
\newblock Proto-value functions: Developmental reinforcement learning.
\newblock In {\em Proceedings of the 22nd International Conference on Machine Learning\/}, pages 553--560.

\bibitem[Mahadevan(2021)Mahadevan]{sm:udm}
Mahadevan, S. (2021).
\newblock Universal decision models.
\newblock {\em CoRR\/}, {\bf abs/2110.15431}.

\bibitem[Mahadevan(2026)Mahadevan]{mahadevancategoriesforagi}
Mahadevan, S. (2026).
\newblock Categories for {AGI}.

\bibitem[Maschler {\em et~al.}(2013)Maschler, Solan, and Zamir]{Maschler_Solan_Zamir_2013}
Maschler, M., Solan, E., and Zamir, S. (2013).
\newblock {\em Game Theory\/}.
\newblock Cambridge University Press.

\bibitem[Nash(1951)Nash]{nash}
Nash, J. (1951).
\newblock Non-cooperative games.
\newblock {\em Annals of Mathematics\/}, {\bf 54}(2), 286--295.

\bibitem[Pearl(2009)Pearl]{pearl-book}
Pearl, J. (2009).
\newblock {\em Causality: Models, Reasoning and Inference\/}.
\newblock Cambridge University Press, 2nd edition.

\bibitem[Riehl(2017)Riehl]{riehl2017category}
Riehl, E. (2017).
\newblock {\em Category Theory in Context\/}.
\newblock Dover Publications.

\bibitem[Rutten(2000)Rutten]{rutten2000universal}
Rutten, J. J. M.~M. (2000).
\newblock Universal coalgebra: A theory of systems.
\newblock {\em Theoretical Computer Science\/}, {\bf 249}(1), 3--80.

\bibitem[Samuel(1959)Samuel]{samuel1959}
Samuel, A.~L. (1959).
\newblock Some studies in machine learning using the game of checkers.
\newblock {\em IBM Journal of Research and Development\/}, {\bf 3}(3), 210--229.

\bibitem[Sokolova(2011)Sokolova]{SOKOLOVA20115095}
Sokolova, A. (2011).
\newblock Probabilistic systems coalgebraically: A survey.
\newblock {\em Theoretical Computer Science\/}, {\bf 412}(38), 5095--5110.

\bibitem[Sutton and Barto(1998)Sutton and Barto]{DBLP:books/lib/SuttonB98}
Sutton, R.~S. and Barto, A.~G. (1998).
\newblock {\em Reinforcement Learning: An Introduction\/}.
\newblock MIT Press.

\bibitem[Valiant(1984)Valiant]{DBLP:journals/cacm/Valiant84}
Valiant, L.~G. (1984).
\newblock A theory of the learnable.
\newblock {\em Communications of the ACM\/}, {\bf 27}(11), 1134--1142.

\bibitem[von Neumann and Morgenstern(1947)von Neumann and Morgenstern]{vonneumann1947}
von Neumann, J. and Morgenstern, O. (1947).
\newblock {\em Theory of Games and Economic Behavior\/}.
\newblock Princeton University Press.

\bibitem[Witsenhausen(1975)Witsenhausen]{witsenhausen:1975}
Witsenhausen, H.~S. (1975).
\newblock The intrinsic model for discrete stochastic control: Some open problems.
\newblock In {\em Control Theory, Numerical Methods and Computer Systems Modelling\/}, volume 107 of {\em Lecture Notes in Economics and Mathematical Systems\/}, pages 322--335. Springer.

\end{thebibliography}
\end{document}